\title{Byzantine Resilience at Swarm Scale: A Decentralized Blocklist Protocol from Inter-robot Accusations}
\author{
    Kacper Wardega\\
    Boston University\\
    Boston\\
    USA\\
\texttt{ktw@bu.edu}\\
\And
Max von Hippel\\
    Northeastern University\\
    Boston\\
    USA\\
\texttt{vonhippel.m@northeastern.edu}\\
\And
Roberto Tron\\
    Boston University\\
    Boston\\
    USA\\
\texttt{tron@bu.edu}\\
\And
Cristina Nita-Rotaru\\
    Northeastern University\\
    Boston\\
    USA\\
\texttt{c.nitarotaru@northeastern.edu}\\
\And
Wenchao Li\\
    Boston University\\
    Boston\\
    USA\\
\texttt{wenchao@bu.edu}\\
}
\newcommand{\closedbox}{\mathcal{U}}
\newcommand{\fastbox}[1]{\closedbox_{d(t-s_#1)}(\tilde{x}_#1)}
\newcommand{\lilwork}{p}
\newcommand{\lilprop}{x}
\newcommand{\robots}{V}
\newcommand{\comm}{E}
\newcommand{\coop}{\mathcal{C}}
\newcommand{\uncoop}{\bar{\mathcal{C}}}
\newcommand{\accmsg}{\texttt{Acc}}
\newcommand{\accg}{\mathcal{A}}
\newcommand{\removed}{R}
\newcommand{\argos}{\textsc{ARGoS}}
\newcommand{\paratitle}[1]{{\noindent\bf #1}.}
\newtheorem{theorem}{Theorem}
\newtheorem{remark}{Remark}
\newtheorem{definition}{Definition}
\begin{document}

\maketitle
\begin{abstract}
    
The Weighted-Mean Subsequence Reduced (W-MSR) algorithm, the state-of-the-art method for Byzantine-resilient design of
decentralized multi-robot systems, is based on discarding outliers received over Linear
Consensus Protocol (LCP). Although W-MSR provides well-understood
theoretical guarantees relating robust network connectivity to the convergence of the underlying consensus, the
method comes with several limitations preventing its use at scale:
\begin{enumerate*}\item the number of Byzantine robots, $F$, to tolerate
should be known a priori, \item the requirement that each robot maintains $2F+1$ neighbors is impractical for large $F$, \item information propagation is hindered by the requirement that $F+1$ robots independently make local measurements of the consensus property in order for the swarm's decision to change, and \item W-MSR is specific to LCP and does not
generalize to applications not implemented over LCP\end{enumerate*}. In this
work, we propose a Decentralized Blocklist Protocol (DBP) based on inter-robot
accusations. Accusations are made on the basis of locally-made observations of misbehavior, and once shared by cooperative robots across the network are used as input to a graph matching algorithm that computes a blocklist. DBP generalizes to applications not implemented via LCP, is
adaptive to the number of Byzantine robots, and allows for fast information
propagation through the multi-robot system while simultaneously reducing the required network
connectivity relative to W-MSR. On LCP-type applications, DBP reduces the worst-case connectivity requirement of W-MSR from $(2F+1)$-connected to $(F+1)$-connected and the minimum number of cooperative observers required to propagate new information from $F+1$ to just $1$ observer. We demonstrate empirically that our approach to
Byzantine resilience scales to hundreds of robots on cooperative target
tracking, time synchronization, and localization case studies.

\end{abstract}
\section{Introduction}
\label{sec:introduction}

Multi-robot systems (MRS) presently employed in industry use structured deployment environments and highly centralized designs~\cite{verma_multi-robot_2021}. Central coordination benefits all key MRS components -- task allocation, execution, fault detection and recovery, while structured environments allow for strict physical security measures. In contrast, emergent MRS applications in unstructured environments (such as patrol, search and rescue, coverage, shape formation, and collective transport) are typically not amenable to centralized approaches due to communication constraints~\cite{gielis_critical_2022}. Decentralized methods to mitigate the negative impact of faulty and/or malicious robots in unstructured environments have therefore attracted much research attention, especially since a wide range of attacks have been shown to disrupt MRS function and safety, e.g. sensor perturbation and denial-of-service (DoS)~\cite{djouadi_finite_2015, zhou_distributed_2020, liu_distributed_2021}, actuator jamming~\cite{guo_roboads_2018}, networking DoS~\cite{yaacoub_robotics_2022}, or Sybil/fraudulent identity attacks~\cite{gil_guaranteeing_2017,mallmann-trenn_crowd_2021}. Given the multitude of possible attacks, it is important to understand the resilience of the MRS from \emph{Byzantine attackers} -- that is if an unknown subset of the robots is allowed to have arbitrarily different behaviors relative to the cooperative robots in terms of physical actions and communication.

Byzantine-unaware MRS implementations are often highly vulnerable, and break completely, when even one robot has been comprised. In our case studies for example, Byzantine robots may cause robots within a swarm to follow a false target, or have arbitrarily large errors in time synchronization or localization. The main approach proposed for Byzantine-resilient MRS is the \emph{Weighted-Mean Subsequence Reduced} (W-MSR) algorithm~\cite{leblanc_resilient_2013,saulnier_resilient_2017,mitra_resilient_2019}. W-MSR is easy to implement and has well-understood theoretical guarantees. However, W-MSR can only be used for MRS applications that are implemented via Linear Consensus Protocol (LCP), performance does not scale with the number of robots in the system, and the number of Byzantine robots to tolerate, $F$, is a parameter that must be known a priori. Suppose that LCP is the means by which the robots reach a collective decision about a physical property of the environment. The choice of $F$ in W-MSR dictates how many outliers robots should discard in each update of linear consensus; each robot needs at minimum $2F+1$ neighbors in order to update their local consensus variable and at minimum $F+1$ cooperative robots must independently make direct measurement of the underlying physical quantity. If $F$ is chosen smaller than the number of Byzantine robots, then the mitigation provided by W-MSR is forfeit. For large $F$, the network connectivity requirement and the logistics of maintaining $F+1$ cooperative observers renders W-MSR impractical. 

In this work we propose \emph{Decentralized Blocklist Protocol} (DBP), an approach to Byzantine resilience inspired by P2P networks, based on inter-robot accusations. Cooperative robots make use of local observations to detect misbehaving peers and make accusations accordingly. Accusations propagate through the cooperative robots, which each robot then independently processes with a matching algorithm to compute a blocklist. We derive necessary and sufficient conditions on the set of accusations that must be made and connectivity of the MRS that ensures that all Byzantine robots are eventually blocked by the cooperative robots, and their influence mitigated. Specifically, we show that for a closed MRS satisfying an analogous $(F+1)$-connectivity requirement for time-varying networks, blocking all of the Byzantine robots is equivalent to Hall's marriage condition on the accusations made within the system.  In addition to W-MSR requiring the number of Byzantine robots to tolerate be known a priori, we claim that W-MSR does not scale with the number of robots in practice. We show empirically on target tracking and time synchronization applications that this is the case, and that our proposed approach adaptively scales to hundreds of robots/attackers, in contrast to just one or two attackers in a swarm of no more than 20 robots as in related works.  W-MSR cannot be used to provide Byzantine resilience for MRS not implemented over LCP, such as cooperative localization. We implement Byzantine-resilient cooperative localization using our approach as a proof of concept; to our knowledge ours is the first successful technique for decentralized and Byzantine-resilient cooperative localization.

\begin{figure}
    \centering
    \includegraphics[width=\linewidth]{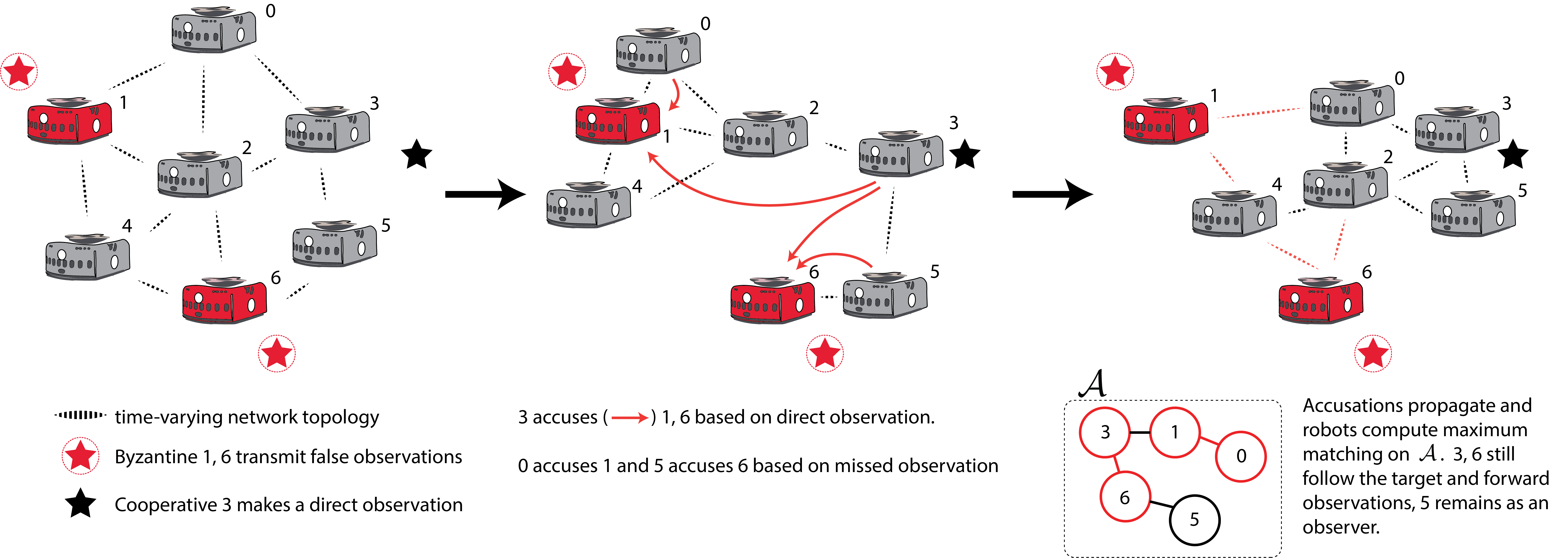} %
    \caption{DBP is used to provide Byzantine resilience for a simple seven robot (two Byzantine) scenario of the target tracking case study explored in Section~\ref{subsec:target-tracking}. Byzantine robots 1 and 6 transmit false target observations (red stars) while cooperative robot 3 makes a direct observation of the target (black star). Based on the closest observer, the cooperative robots move towards the supposed target location. Robots 0 and 5 make it to the false locations reported by 1 and 6 respectively and accuse the observers on the basis of missed observations that should have been made. Meanwhile, robot 3 accuses both 1 and 6 since their observations contradict 3's direct observation. The accusations flood through the cooperative robots, until eventually each cooperative robot has accusation graph denoted $\accg$. Edmond's algorithm is used by each robot to independently compute the maximum matching $\{(3,6),(0,1)\}$ (red edges), thus observations from robots 0, 1, 3, and 6 are blocked. Observations from robots 2, 4, and 5 are still trusted by the other cooperative robots. Note that 0 and 3 still continue to cooperate by moving towards the target and forwarding observations from non-blocked observers.}
    \label{fig:vigilante-overview}
\end{figure}

\section{Background \& Related Work}
\label{sec:related}

\paratitle{W-MSR} Perhaps the most well-understood approach to Byzantine-resilient decentralized MRS is the W-MSR algorithm. W-MSR can be applied to MRS applications that are implemented over Linear Consensus Protocol -- a distributed consensus algorithm for real-valued variables whereby in each timestep robots update their local variable to a convex combination of their neighbor's broadcast values, i.e. $$x_i(t)=\sum_{j\in\mathcal{N}_G(i)}\alpha_jx_j(t-1)\text{ where }\sum\alpha_j=1$$ and $\mathcal{N}_G(i)$ are the neighbors of $i$ in the connectivity graph $G$. The authors of~\cite{leblanc_resilient_2013} first introduced W-MSR for Byzantine resilience which discards the $F$ highest and $F$ lowest values received at each timestep of LCP, and show that convergence despite up to $F$ Byzantine robots is equivalent to a graph robustness property. 
 Specifically, if the connectivity graph of the robots is at least $(2F+1)$-vertex-connected, then the consensus will converge to a value within the convex hull of the cooperative robots' initial values.
W-MSR has been applied to a variety of applications, such as flocking~\cite{saulnier_resilient_2017} and state estimation~\cite{mitra_resilient_2019}.
Extensions for the W-MSR algorithm to time-varying networks where the union of the connectivity graphs within a bounded window is robust are proposed in~\cite{saldana_resilient_2017} and to event-driven control in~\cite{amirian_distributed_2021}.
Methods to form robust graph topologies, as required by the W-MSR algorithm, are proposed in~\cite{guerrero-bonilla_formations_2017}.

\paratitle{Blockchain} Distributed ledger technologies, e.g. blockchains, have also attracted much research attention for its potential to provide resilience guarantees. For similar settings as considered in this paper, \cite{strobel2018managing} proposes to use an Ethereum blockchain for a MRS collective decision-making case study. The authors of  \cite{pacheco2020blockchain} investigate the approach of consensus over blockchain.
We refer the reader to a recent survey \cite{aditya2021survey} of work on blockchain for robotics applications, including for MRS and swarm.

\paratitle{Inter-robot observations} The use of inter-robot observations to detect misbehavior and establish trust is a common theme in multi-agent systems generally. As opposed to our work, where accusations are used to compute a blocklist, the authors of \cite{ashkenazi_forgive_2019} propose that cooperative robots should take physical action to isolate misbehaving robots on observing incorrect behavior. In a cooperative patrolling case study for example, cooperative robots surround and impede the movement of robots that are observed not following the correct trajectory. More commonly, inter-robot observations are used as input to a reputation mechanism, whereby robots maintain real-valued reputation scores of their peers. For example, \cite{fagiolini_decentralized_2007} presents a connected vehicle case study where robots use partial information to determine if their local neighbors are non-cooperative, and a later work \cite{arrichiello_observer-based_2015} proposes an adaptive threshold-based actuator fault detection strategy for MRS. The use of reputation scores as input to a  cooperative coverage problem is explored in \cite{inaba_adaptive_2016}, and 
\cite{cheng2021general} introduces a general trust framework for multi-robot systems with case studies on intelligent intersection management. Reputation mechanisms can also be merged with consensus, i.e. \cite{fagiolini_consensus-based_2008} proposes that robots perform consensus on the reputation values of the robots. Ultimately, reputation mechanisms inherit the drawbacks of W-MSR, in that $F+1$ cooperative robots would be required to assign a low reputation to each misbehaving robot before their influence is removed from the system, and furthermore the number of consensus variables (the reputation scores) scales linearly with the size of the swarm. 

\paratitle{MRS security} Beyond the mentioned directly related works, our work is motivated by the broader push for secure MRS. We refer the reader to \cite{bijani_review_2014} for a comprehensive survey of open security problems in MRS, with problem-specific recommendations for mitigation, and to  \cite{zhou_multi-robot_2021} for a treatment of recent approaches to MRS in uncertain or adversarial operating environments.
Decentralized MRS has a wide attack surface; \cite{guo_roboads_2018} tests CPS-inspired anomaly detection for a variety of compromise scenarios such as wheel jamming, LiDAR denial-of-service, wheel encoder logic fault, etc., and \cite{djouadi_finite_2015} analyzes bounded sensor attacks. A novelty of our work is the proof-of-concept Byzantine-resilience for cooperative localization. Byzantine localization is a challenging problem, \cite{weber_gordian_2020} derives conditions under which Byzantine localization is solvable in the centralized setting; similarly as with W-MSR, Byzantine localization requires the graph structure of inter-robot measurements to satisfy a robustness property.

\paratitle{Sybil attacks} Certain threat models have received special treatment in the literature. Efficient methods for scheduling MRS under denial of service attacks are proposed in \cite{zhou_distributed_2020}, which \cite{liu_distributed_2021} extends to the decentralized setting. In our work, we assume that the MRS is protected from Sybil attacks since a central trust authority issues identities for all of the robots. We believe that Sybil-proofness of the system via central authority is a reasonable assumption for a \emph{closed} MRS, where the set of robots does not change with time.  Decentralized identity management and Sybil-proofness for MRS is however an active area of research orthogonal to our own. Using inter-robot radio signals to detect Sybil identities is first proposed in \cite{gil_guaranteeing_2017}; the same technique is later applied to a cooperative flocking scenario under Sybil threat model in \cite{mallmann-trenn_crowd_2021}. Other physics-inspired, application-specific approaches have been proposed in the literature, for example defending against Sybil attacks on a crowdsourced traffic light \cite{shoukry_sybil_2018}. Prior work has also proposed to incorporate Sybil attack prevention as a component of the W-MSR algorithm \cite{renganathan_spoof_2017}.

\section{Threat Model}
\label{sec:threat}

We consider a swarm robotics system with robots connected by time-varying network topology  $G[t]=(\robots,\comm[t])$.   Each robot has an identity that has been issued by a trusted central authority at deploy-time, which it uses to both send
signed messages to its neighbors and to verify the authenticity of received messages.  Assume that some unknown subset of the robots have been compromised by a Byzantine adversary. We refer to the cooperative robots as $\coop\in\robots$,  the Byzantine ones as $\uncoop=\robots\setminus\coop$, and we assume that the sets $\robots$ and $\coop$ are fixed, i.e. the MRS is \emph{closed}. We assume a strong adversary,
where the Byzantine robots can coordinate centrally with each other online, have detailed knowledge of the system implementation such as robot capabilities and application details, can send arbitrary messages to the cooperative robots, and have arbitrary physical behaviors. The goal of the Byzantine robots is to disrupt the MRS application; the specific goal and attack strategy will depend on the application. Each of our case studies in Section~\ref{sec:case} will specify the attacker's goal and strategy. Since the robots use their trust authority-issued identities to communicate, we assume that Sybil attacks are not possible for the adversary, since the adversary is unable to forge fraudulent identities that will be accepted by $\coop$. However, robots whose
identities (secret keys) have been compromised can be used by the adversary to send misleading messages. Therefore, any robots in the swarm whose keys have been compromised are considered to be part of $\uncoop$. 

\section{Decentralized Blocklist Protocol}
\label{sec:DBP}

Decentralized Blocklist Protocol (DBP) is a swarm blocklist algorithm that is adaptive to the presence of Byzantine adversaries. DBP can be used as an alternative to W-MSR, but with lower requirement on network connectivity and without needing to know~$F$ ahead of time. DBP is adaptive, and as such the requirement on robust network topology scales with the true number of Byzantine robots. The connectivity requirements of W-MSR scale with the parameter~$F$, even if the actual number of Byzantine robots is lower. An example of how DBP works on a target tracking scenario is shown in Fig.~\ref{fig:vigilante-overview}. Based on locally-made observations, cooperative robots accuse misbehaving peers. The accusations propagate through the network via flooding and are used as input to a matching algorithm that outputs a blocklist.

DBP relies on flooding as a networking primitive, where cooperative robots always re-broadcast (forward) received messages.  Messages in DBP are accusations signed by the robot initiating the flood. Accusations  $\accmsg_i(j)$ are an application-agnostic message and the payload is simply the identity of a robot $j$ that the origin $i$ wishes to accuse. The precise rules used to decide if and when an accusation should be issued are application-specific. Accusations serve to remove the influence of Byzantine nodes on the swarm application. Each robot $i$ locally maintains a set $\removed_i[t]$ of accusations that it has received. A subset $\removed^*_i[t]\subseteq\removed_i[t]$ will be locally computed by $i$ using any deterministic maximum matching algorithm (such as Edmond's~\cite{edmonds1965paths})   to form the blocklist. For the remainder
of this section, we will assume that the robots  have a \emph{sound} accusation mechanism:

\begin{definition}[Sound accusation]
    If a cooperative robot $i$ issues an accusation $\accmsg_i(j)$, then $\accmsg_i(j)$ is \emph{sound} if and only if $j\in\uncoop$. 
\end{definition}

\begin{remark}
    \label{rem:lor}
    In the presence of Byzantine robots, receiving a message $\accmsg_i(j)$ implies that $i\in\uncoop\lor j\in\uncoop$. The reason is that if $i\in\coop$, then $j\in\uncoop$ by soundness of accusations. In the other case, $i\in\uncoop$.
\end{remark}

 \paratitle{Matching} Importantly, the set of received accusations $\removed_i[t]$ has a structure imparted by the accusation soundness. Given an undirected graph $G=(V=X\cup Y,E)$ with $X,Y$ disjoint, we say that $G$ is \emph{$X$-semi-bipartite} if $X$ is an independent vertex set in $G$. A subset $\mathcal{M}\subseteq E$ is a \emph{matching} on $G$ if $\mathcal{M}$ is an independent edge set in $G$. Given a matching $\mathcal{M}$, we denote by $V_\mathcal{M}$ the matched vertices in $\mathcal{M}$. If no additional edges can be added to a matching $\mathcal{M}$, then $\mathcal{M}$ is \emph{maximal}. If there does not exist a matching $\mathcal{M}^*$ s.t. $|V_{\mathcal{M}^*}|>|V_\mathcal{M}|$, then $\mathcal{M}$ is a maximum cardinality, or \emph{maximum}, matching.  Given a subset $S\subseteq V$, a matching $\mathcal{M}$ is \emph{$S$-perfect} if $S\subseteq V_\mathcal{M}$. The following condition allows us to connect the notion of maximum matching and perfect matching:
 
 \begin{definition}[Hall's Marriage condition]
	Given $G=(X\cup Y,E)$ s.t. $G$ is $X$-semi-bipartite, a $Y$-perfect matching exists if $\forall S\subseteq Y$, $|S|\leq|\mathcal{N}_G(S)\cap X|$. Additionally, any maximum matching will be $Y$-perfect.
\end{definition}

\paratitle{Accusation graph} Now let $\accg_k[t]$ be the \emph{accusation graph} with edge $(i,j)$ iff $\accmsg_i(j)\in\removed_k[t]$. As we note in Remark~\ref{rem:lor}, each accusation can be viewed as a disjunction -- $\accmsg_i(j)$ can be understood as ``$i$ is Byzantine or $j$ is Byzantine (or both are).'' Therefore, $\accg_k[t]$ is $\coop$-semi-bipartite, and any matching $M$ on $\accg_k[t]$ will satisfy $|V_M|\leq 2|\uncoop|$. The inequality will be tight if and only if the Hall marriage condition holds for $\uncoop$ on $\accg_k[t]$ -- in which case the maximum matching $M$ is $\uncoop$-perfect with $|V_M|=2|\uncoop|$. Robot $k$ chooses $\removed^*_k[t]$ to be the matched vertices of the maximum matching on $\accg_k[t]$ -- the robots corresponding to the matched vertices are the ones that $k$ will block. An example accusation graph and associated maximum matching is shown as ``$\accg$'' in Fig.~\ref{fig:vigilante-overview}.

\paratitle{Network flooding} This matching result is only useful if the requisite accusations actually propagate through the robots in $\coop$. Given a time-varying directed graph $G[t]=(V,E[t])$, consider the execution of a   network flood where a node $v\in V$ initiates a flood at time $\tau$ by transmitting a message to its neighbors $\mathcal{N}_{G[\tau]}(v)$.   The flood continues when $v$'s neighbors transmit to their neighbors so that at time $\tau+2$, $\mathcal{N}_{G[\tau+1]}(\mathcal{N}_{G[\tau]}(v))$ will receive the message. Continuing the pattern, the $s$-frontier of the flood, for positive integer $s$, is given by $$\mathcal{N}^s_{G[\tau]}(v):=\mathcal{N}_{G[\tau+s-1]}(\mathcal{N}_{G[\tau+s-2]}(\cdots\mathcal{N}_{G[\tau]}(v)))$$ The $s$-closure of the flood is then the union $$\mathcal{N}^{s^*}_{G[\tau]}(v):=\mathcal{N}^0_{G[\tau]}\cup\cdots\cup\mathcal{N}^s_{G[\tau]}$$
If for arbitrary initial node $v$ and starting time $\tau$, there exists a positive integer $s$ such that $\mathcal{N}^{s^*}_{G[\tau]}(v)=V$, then we say that $G[t]$ is \emph{floodable}. So far we have assumed that nodes may re-transmit the message multiple times. If we limit the number of re-transmissions to $n$, and there still exists an $s$ s.t. the analogously defined $(n,s)$-closure equals $V$, then we say that $G[t]$ is \emph{$n$-floodable}. If $|V|\geq k$ and after the removal of an arbitrary set of $k$ nodes from $V$, $G[t]$ is still $n$-floodable, then we say that $G[t]$ is \emph{$(k,n)$-floodable}. Ultimately, we can now state that cooperative nodes will eventually hear all accusations and have the same accusation graph despite up to $F$ Byzantine robots:
\begin{theorem}[Eventual Blocklist Consensus]
	Let $G[t]$ be the time-varying, $(F,n)$-floodable network topology of the robot swarm $\robots$. If $|\uncoop|\leq F$, there $\exists\tau\in\mathbb{Z}^+,\accg\forall i\in\coop,s\geq\tau$ s.t. $\accg=\accg_i[s]$. 
\end{theorem}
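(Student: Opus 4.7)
The plan is to reduce the theorem to two claims: (i) the set of accusations that are ever received by any cooperative robot is finite, and (ii) every accusation received by at least one cooperative robot is eventually received by all cooperative robots within a bounded number of steps. Combining these, let $A^{*}$ be the (finite) set of all accusations ever received by any $i\in\coop$; for each $\alpha\in A^{*}$ let $t_\alpha$ and $s_\alpha$ be the first reception time and propagation bound provided by (ii). Take $\tau=\max_{\alpha\in A^{*}}(t_\alpha+s_\alpha)$, let $\accg$ be the graph whose edges are the pairs encoded by $A^{*}$, and conclude $\accg_i[s]=\accg$ for all $i\in\coop$ and $s\geq\tau$.

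Claim (i) is immediate: an accusation is a pair $(i,j)\in\robots\times\robots$ and $|\robots|$ is finite, so at most $|\robots|^{2}$ distinct accusations can ever appear, regardless of how many are injected by $\uncoop$. Claim (ii) is where the $(F,n)$-floodability hypothesis does the real work. Because $|\uncoop|\leq F$, removing $\uncoop$ from $\robots$ leaves a time-varying subgraph $G[t]$ restricted to $\coop$ that is $n$-floodable by definition. Suppose $\alpha=\accmsg_i(j)$ is first received by some $k_{0}\in\coop$ at time $t_\alpha$. Because cooperative robots always re-broadcast, from $t_\alpha$ onward $k_{0}$ transmits $\alpha$ to its cooperative neighbors; this is exactly a flood initiated at node $k_{0}$ at time $t_\alpha$ within the $n$-floodable subgraph on $\coop$, so by the definition of floodability there exists an $s_\alpha$ with $\mathcal{N}^{s_\alpha^{*}}_{G[t_\alpha]|_\coop}(k_{0})=\coop$, i.e., every cooperative robot receives $\alpha$ by time $t_\alpha+s_\alpha$ using only cooperative relays.

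The main obstacle is ruling out adversarial interference with the propagation argument. A Byzantine robot could deliver an accusation to only a single cooperative neighbor, delay injection arbitrarily, or selectively refuse to forward; but none of these strategies matter, because once even one cooperative robot holds the accusation the propagation in claim (ii) proceeds entirely along edges of $G[t]|_\coop$ and does not rely on any Byzantine relay. A second subtlety is that the definition of $n$-floodability caps re-transmissions at $n$ per flood per node; this is consistent with DBP's always-forward rule provided we read each distinct accusation as a separate flood, which is the natural interpretation since $\removed_i[t]$ stores distinct accusations and redundant re-broadcasts of a single accusation are bounded by $n$ per cooperative node on its path to saturating $\coop$. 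Finally, one should remark that after time $\tau$, $\accg_i[s]$ can only grow monotonically in $s$, and since $A^{*}$ is already exhausted at $\tau$, the graph is stationary and equal across all $i\in\coop$, completing the proof.
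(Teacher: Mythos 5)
Your proof is correct and follows essentially the same route as the paper's: finiteness of the set of possible accusations in a closed MRS gives a stabilization time, and $(F,n)$-floodability guarantees that each accusation, once held by any robot in $\coop$, saturates all of $\coop$ via cooperative relays alone, regardless of Byzantine behavior. You simply make explicit some details the paper leaves implicit (the construction of $\tau$ as a maximum over per-accusation delivery bounds, the monotonicity of $\accg_i[s]$, and the compatibility of the $n$-retransmission cap with treating each accusation as its own flood).
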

\begin{proof} By definition of $(F,n)$-floodable, we have that all accusations made by $V$ will eventually reach all of $\coop$, since the cooperative robots can ensure eventual delivery of an accusation to all of $\coop$ even if up to $F$ Byzantines do not forward accusations.  Given that the MRS is closed, the number of possible accusations is finite (bounded by $2|\coop|+|\uncoop|^2$) and therefore there exists a time $\tau$ after which no new accusations are made. As each cooperative robot uses the same deterministic algorithm to compute maximum matchings on the accusation graph, each cooperative robot will eventually compute the same maximum matching and arrive at the same list of robots to block.\end{proof}

If the assumption that $G[t]$ is $(F,n)$-floodable does not hold, then some cooperative robot(s) may not receive 
some of the accusations. If the $\removed_k[t]$ are not eventually equivalent across all $k$, it is possible that not all uncooperative robots are blocked (even though globally, enough accusations have been made to satisfy the Hall marriage condition). However, all of $\uncoop$ will be blocked by $k$ provided that $k$'s local accusation graph $\accg_k[t]$ satisfies the Hall marriage condition, but the matched cooperative robots on the blocklist may not be the same as those on other blocklists. 

\section{Case Studies}
\label{sec:case}

We run our experiments on turtlebots simulated in \argos~\cite{Pinciroli:SI2012},
a multi-physics robot simulator that can efficiently simulate large-scale swarms of robots. The robots are equipped with a radio to transmit to neighboring robots within 4m and have an omnidirectional camera used for nearby target detection and collision avoidance with an observation distance of $\approx0.9$m. The robot controller runs at 30Hz. Source code to reproduce our experiments can be found at \url{https://github.com/gitsper/decentralized-blocklist-protocol}

\subsection{Target Tracking}
\label{subsec:target-tracking}

\paratitle{Application overview} In swarm target tracking, the goal of the robots is to locate and cooperatively follow a mobile target that has a maximum speed of $d$. In our experimental setup, the target is a robot that has a yellow light -- robots within a distance $r$ can see the light and make a direct observation of the target. 
To enable the entire swarm to track the target, even for those robots that do not directly observe the target, robots broadcast target observation messages containing:
\begin{enumerate}
    \item the observer's unique ID
    \item the time of the observation
    \item the observed location of the target
\end{enumerate}
In each timestep, robots sort received observation messages by observation time, and choose the most recent one to  transmit to its neighbors. Robots keep track of how many times a given observation message has been transmitted, and stop sharing it after fixed, finite number of times. The purpose of transmitting the same observation message multiple times is to account for the time-varying connectivity with neighboring robots. In addition to the application messages, DBP is used to mitigate the influence of Byzantine robots. Robots delete and do not forward observations messages from blocked observer IDs. Old observation messages are periodically deleted from the local cache.

\begin{figure}[ht]
    \centering
    \usetikzlibrary{decorations.markings,arrows,calc,shapes.geometric,positioning}
\begin{tikzpicture}[decoration={markings,
        mark=between positions 0 and 0.9 step 0.25
            with {
                        \draw (0,0) node[
                                                    rectangle,
                                                    fill=blue!40,
                                                    draw=black,
                                                    semitransparent,
                                                    name=rect-\pgfkeysvalueof{/pgf/decoration/mark info/sequence number},
                                                    minimum height=100-16*\pgfkeysvalueof{/pgf/decoration/mark info/sequence number},
                                                    minimum width=100-16*\pgfkeysvalueof{/pgf/decoration/mark info/sequence number}] {};
                         \node[opaque, star, star points=5, star point ratio=1.8,inner sep=1.2pt, fill=black, draw] at (0,0) {};
                    },
            mark=at position 0.999 with {\arrow[opaque,black]{>};}
        }]
    \draw [transparent,postaction={decorate}] (0,0) .. controls (0.2,0.7) and (0.8,1.5) .. (1.2,0.8);
    \draw (0,0) .. controls (0.2,0.7) and (0.8,1.5) .. (1.2,0.8);
    \draw[->] ([xshift=-40pt,yshift=20pt]rect-1.west) -- ([xshift=-3pt]rect-1.west) node[yshift=5pt,xshift=-5pt,left,above,near start] {$\mathcal{U}_{d(t-s)}(\tilde{x})$};
    \draw (2.4,0.5) node[rectangle,fill=red!40,draw=black,semitransparent,minimum height=24,minimum width=24,name=rect-bad] {};
    \node[font=\footnotesize,align=left,right of=rect-bad,anchor=west,xshift=-4pt] (bad-text) {Greedily ignore \\observations with\\empty intersection};
    \draw[->] ([xshift=2pt]rect-bad.east) -- (bad-text.west);
    \node[opaque,star,star points=5, star point ratio=1.8,inner sep=1.2pt, fill=red!80] at (rect-bad) {};
    \draw[draw=red!80,densely dotted] (rect-bad) circle (3.4pt);
\end{tikzpicture}
    \caption{Observation-based target tracking setup for use with DBP. Robots that do not observe the target directly sort received observations by age and compute a bounding box for each observation containing the target based on the elapsed time. Reducing over the bounding boxes with the set intersection operator yields the robot's current belief about the target location. Conflicting observations, those that result in an empty intersection, are dropped, ending the iteration.}
    \label{fig:target-tracking-motion}
\end{figure}
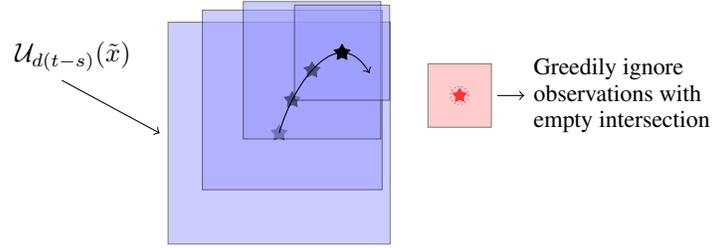

\paratitle{Controller} For robots that directly observe the target, they compute a heading vector pointing to the target from their current location and move towards the target. Robots that do not directly observe the target rely on received observation messages to compute their heading vector. We denote by $\closedbox_d(c)$ the closed square centered at $c$ with side length $2d$. Given an observation message with  time $s$ and observed target location $\tilde{x}$, the implied belief is that the set $\closedbox_{d(t-s)}(\tilde{x})$ contains the target at the current time $t>s$. First, the received observation messages are sorted by time $(s_1,\tilde{x}_1),(s_2,\tilde{x}_2),\ldots$ with $s_1\geq s_2\geq\cdots$. To compute the heading vector, robots iteratively take the intersection $$\fastbox{1}\cap\fastbox{2}\cap\cdots$$ If the intersection ever  becomes empty while iterating,  the offending observation is dropped and the iteration ends. Robots take the center of the intersection to be their believed target location and use it to compute their heading vector. The control procedure is illustrated in Fig.~\ref{fig:target-tracking-motion}. Bounding boxes are used instead of circles to simplify the computation of set intersections.%

\paratitle{Accusation rules} On receiving a new observation message, robots issue DBP accusations according to four target tracking-specific accusation rules. Given the received observation by robot $j$ of $\tilde{\lilprop}$ made at time $s$, let $\Delta t=t-s$ the elapsed time, $\Delta\lilwork_i=\|\lilwork_i[t]-\tilde{\lilprop}\|$ the distance from $i$'s location $\lilwork_i[t]$ to the observed target, and $c$ a constant denoting an upper bound on the speed with which messages can travel through the network (in our experimental setup, 4m/timestep). The first accusation rule is triggered when $r+c\Delta t<\Delta\lilwork_i$, as the observation would need to have traveled faster-than-possible through the network. The second accusation rule is triggered when $\Delta\lilwork_i<r-d\Delta t$ and $i$ did not make a direct observation of the target -- $i$ missed an observation that it should have made if the received observation was legitimate. The third accusation is rule is triggered when $\Delta\lilwork_i>r+d\Delta t$ but a direct observation \emph{was} made by $i$; in this case the target couldn't possibly have moved fast enough from the received observation location to the place where $i$ observed it presently. Finally, the last accusation rule detects oscillations from a single observer. If $i$ has received an observation from $j$ in the past, it will consider the most recent previous observation from $j$ of $\tilde{\lilprop}_\text{old}$ at time $s_\text{old}$, and will make an accusation of $j$ if $\|\tilde{\lilprop}-\tilde{\lilprop}_\text{old}\|>d(s-s_\text{old})$. In this case, $j$'s observations are inherently inconsistent with the maximum rate of change in $\lilprop$.

\paratitle{Experiment setup} We compare DBP-based Byzantine-resilient target tracking with the state-of-the-art W-MSR-based approach. Aside from not needing to know the number of Byzantine robots to tolerate a priori and lower network connectivity requirement, \emph{our approach requires just one non-blocked cooperative robot to observe the moving target}, whereas W-MSR requires $F+1$ cooperative observers to shift the consensus among the cooperative robots as the target moves. We simulate $|\coop|=200$ and $|\uncoop|=100$ robots to compare tracking performance. Byzantine robots may transmit observation messages and accusations with arbitrary contents. In our scenarios, the behavior of the Byzantine robots is to distribute evenly through the environment and to continuously broadcast false observations -- each Byzantine robot picks the location $\sim0.4$m away from itself directed away from the origin as the broadcast observation. This Byzantine strategy attempts to lower the network connectivity by causing the cooperative robots to spread out and away from the origin, while simultaneously not violating the speed of network accusation rule.

\begin{figure}[ht]
    \centering
    \includegraphics[width=0.4\linewidth]{./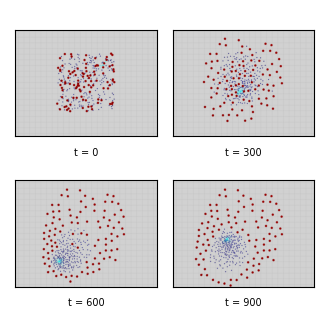}
    \caption{View of DBP-based target tracking in \argos. Byzantine robots are highlighted with red circles, direct observations of the target are shown in cyan.}
    \label{fig:target-tracking-DBP-vis}
\end{figure}
\begin{figure}[ht]
    \centering\def\svgwidth{0.6\linewidth}\input{./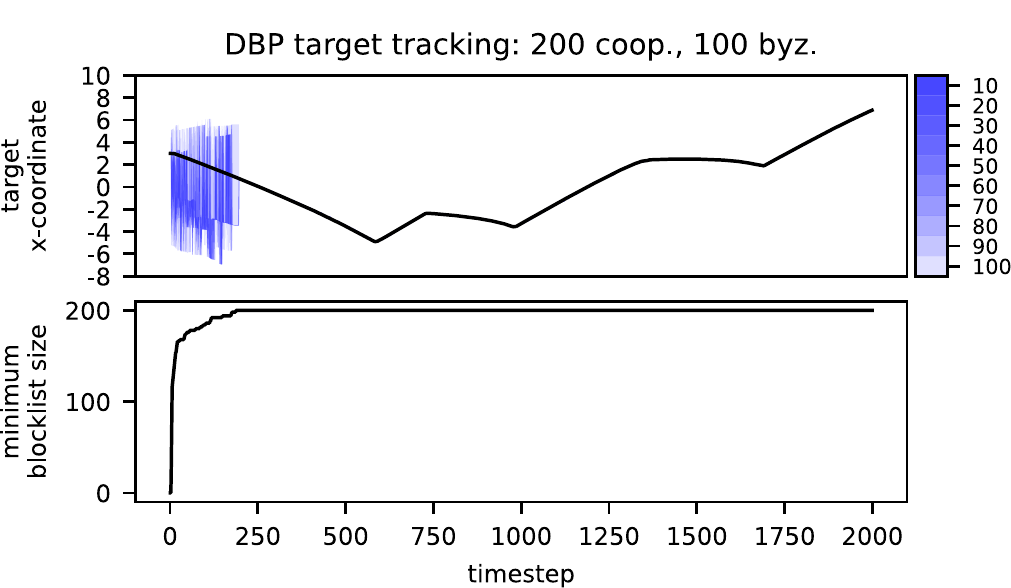_tex}
    \caption{DBP-based target tracking performance. At top, the black curve shows the true x-coordinate of the moving target and the shaded blue regions show the range of beliefs as percentiles around the median. At bottom we plot $\min_i\removed_i^*[t]$, i.e. the minimum blocklist size. At timestep $\sim200$, all of the Byzantine robots have been blocked on each cooperative robot, and the cooperative robots track the target with close to no error as the influence of the Byzantines has been removed.}
    \label{fig:target-tracking-DBP}
\end{figure}
\begin{figure}[ht]
    \centering\def\svgwidth{0.6\linewidth}\input{./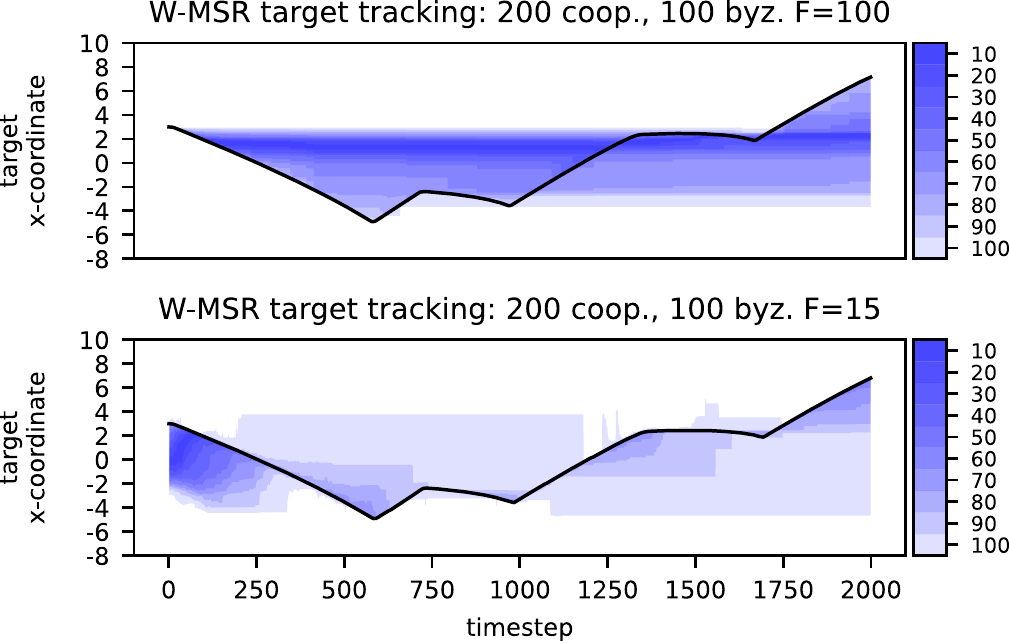_tex}
    \caption{W-MSR-based target tracking performance. When the resilience parameter $F=100$ (top) in order to guarantee safety, the information about the moving target cannot propagate through the cooperative robots due to the high connectivity and simultaneous observer requirements. As a comparison, $F=15$ (bottom) has no safety guarantee but allows a subset of the robots to track the target successfully. However, the influence of the Byzantines is never removed.}
    \label{fig:target-tracking-WMSR}
\end{figure}

\paratitle{Experiment result} In Fig.~\ref{fig:target-tracking-DBP} we plot the belief that each cooperative robot has about the $x$-coordinate of the target, summarized using a quantile heatmap. The range of beliefs decreases until approx. $t=400$, at which point all of the Byzantine robots have been blocked and the execution enters the regime with all Byzantine influence removed. Views of the DBP target tracking experiment in the \argos~simulator are shown in Fig.~\ref{fig:target-tracking-DBP-vis}. The baseline W-MSR algorithm requires the resilience parameter $F$ to be picked a priori. If $F$ is chosen too small, the theoretical guarantees of W-MSR are forfeited so the cooperative robots' consensus may be disrupted by the Byzantine robots. Specifically, part of the swarm where the density of Byzantine robots is low may be able to track the target successfully, however cooperative robots with more than $F$ Byzantine neighbors will be affected by the attack. Each cooperative robot affected by the attack will in turn strengthen the attack as their local value nears the attacker's value -- this scenario is shown with $F=15<|\uncoop|$ at the bottom of Fig.~\ref{fig:target-tracking-WMSR}. However, W-MSR does not scale to large $F$, since the robots cannot achieve such a high level of network connectivity and also high number of cooperative observers. The large-$F$ regime is shown at top in Fig.~\ref{fig:target-tracking-WMSR}, with $F=100=|\uncoop|$.

\subsection{Time Synchronization}

\paratitle{Application overview} For this task, the robots' objective is to cooperatively synchronize their local clocks to a universal reference clock while moving through the environment. A subset of the robots are designated as \emph{anchors} -- these robots periodically make high-precision observations of the reference clock time. As in the target tracking application, the anchors broadcast observation messages containing: \begin{enumerate} \item the observer's unique ID \item the observed time \end{enumerate} In each timestep, the non-anchor robots sort received observation messages by the observed time in decreasing order and choose the largest value to re-transmit to neighbors, and DBP is used to delete and selectively not forward observation messages from blocked observers.

\paratitle{Controller} On those timesteps when new observation messages are received, non-anchor robots simply update their local clock by setting it to the maximum observed time in their list of observation messages. If a new observation message is not received during a timestep, a non-anchor robot $i$'s local clock is updated by adding a number sampled from the distribution $1+\mu_i+U[-0.05,0.05]$, where $U[a,b]$ is the uniform distribution on $[a,b]$ and $\mu_i$ is sampled at the beginning of the simulation from $U[-0.01,0.01]$. This update behavior is intended to simulate a random-walk clock drift when no new observation messages are received.

\paratitle{Accusation rules} Whenever an anchor robot receives a new observation message, it issues an accusation of the origin if the observed time is larger than the anchor's local time. The intuition behind this accusation rule is that the difference between the received observed time and the anchor's local time can only be negative -- if the observer is cooperative then the difference should correspond to the number of hops that the observation made on a shortest path to the receiving anchor. If the difference were to be positive, this would imply that the observer's local clock is ahead, violating the assumption that cooperative anchors make high-precision observations of the reference time.

\paratitle{Experiment setup} We compare DBP-based Byzantine-resilient time synchronization with the state-of-the-art W-MSR-based approach. We simulate $|\coop|=100$ ($50$ of which act as anchors, with observation period of 100 timesteps) and $|\uncoop|=45$ robots to compare the synchronization performance. Byzantine robots may send arbitrary observation messages, including impersonating anchors. The behavior of the Byzantines in our experiments is to move through the environment just as the cooperative robots do, while broadcast false reference clock observations with the same period as the cooperative anchors. The false observations are the true reference clock value, plus an attack offset of $+1000$ timesteps. This choice of Byzantine adversary attempts to disrupt the time synchronization of the cooperative nodes by forcing the non-anchors to adopt local clock values that are too large -- too-low values would be ignored by cooperative robots since each non-anchor always sets their local clock to the maximum observed clock value.

\begin{figure}[h]
    \centering\def\svgwidth{0.6\linewidth}\input{./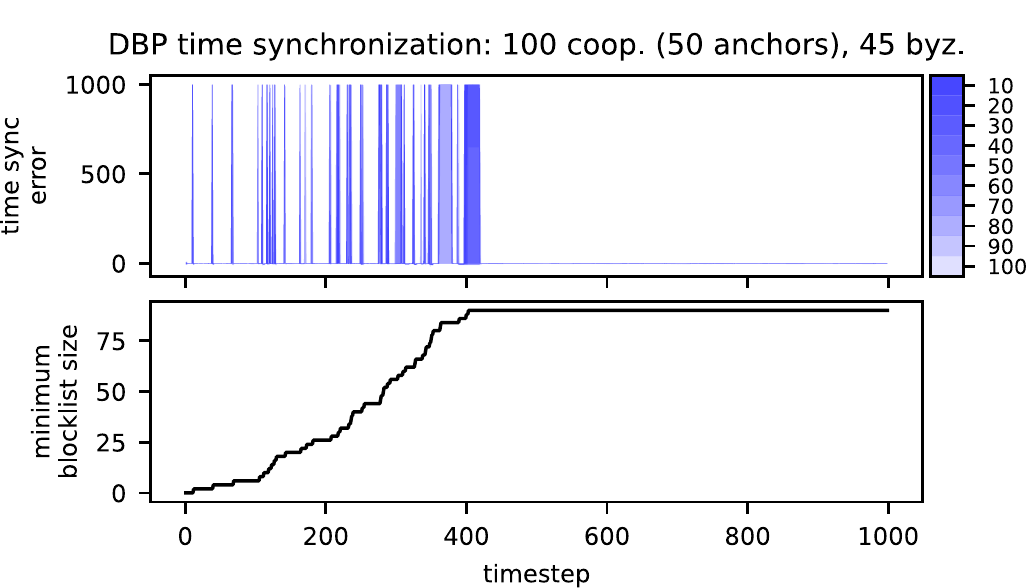_tex}
    \caption{DBP-based time synchronization performance. Shown at top is the error between the cooperative robots' local clocks relative to the global reference clock. The error spikes to the attacker offset whenever a Byzantine robot initiates an attack. After timestep $\sim400$, all of the Byzantines have been blocked and the tracking error remains nominal.}
    \label{fig:time-sync-DBP}
\end{figure}
\begin{figure}[h]
	\centering\def\svgwidth{0.6\linewidth}\input{./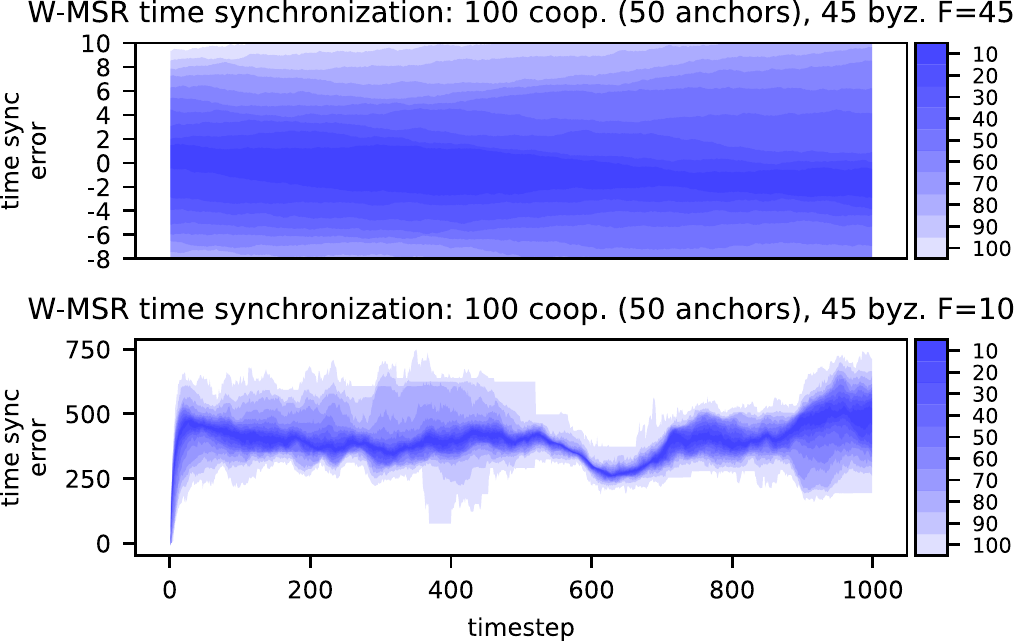_tex}
	\caption{W-MSR-based time synchronization performance. As in the target tracking case study, $F=|\uncoop|=45$ guarantees safety, but the associated connectivity requirement prohibits convergence. Conversely, a lower $F=10$ permits convergence of the consensus at the cost of allowing the Byzantines to adversely perturb the cooperative robots' local clock values.} 
	\label{fig:time-sync-WMSR}
\end{figure}

\paratitle{Experiment result} In Fig.~\ref{fig:time-sync-DBP} we plot the time synchronization error (difference between local time and the reference time) of the cooperative non-anchor robots over the course of the simulation. We observe that the Byzantine robots are able to push the synchronization error to the attack offset of $+1000$ timesteps by transmitting the false reference time observations, until timestep $\sim400$, at which point each cooperative robot has blocked all of the Byzantine robots and the attacker influence is successfully removed. As with the target tracking case study, the W-MSR baseline requires a choice to be made a priori for the resilience parameter, $F$. If $F$ is chosen too small, e.g. $F=10$ shown at bottom in Fig.~\ref{fig:time-sync-WMSR}, then the Byzantine robots influence the consensus and the cooperative robots have local clock values between the attack offset and the reference time. If $F$ is chosen large enough to be resilient to $|\uncoop|=45$ attackers, then the connectivity and simultaneous observation requirement is too large for the non-anchor robots to update their local clocks from neighbor's observations. The large-$F$ regime is shown at top in Fig.~\ref{fig:time-sync-WMSR} with $F=45=|\uncoop|$.

\subsection{Cooperative Localization}

\paratitle{Application overview} In the cooperative localization task, robots move in an unknown and/or dynamic environment and use local inter-robot distance measurements to estimate their position within a global coordinate system. To facilitate this task, a subset of the robots operate as anchors, and periodically make high-precision observations of their position (e.g. as static, pre-positioned anchors or mobile robots with GPS). As opposed to the target tracking and time synchronization applications, non-anchor robots also broadcast a localization message containing their localization belief. The localization message contains:
\begin{itemize}
	\item the sender's unique ID
	\item the sender's local time
	\item the sender's believed localization, expressed as bounding box
	\item an anchor flag, set if and only if the sender is an anchor\begin{itemize} \item if the anchor flag is not set, the most recently received anchor localization message\end{itemize}
\end{itemize} 
Non-anchor robots initially have no belief about their localization. Once a belief is formed (initially, just the anchors), non-anchors begin to periodically broadcast localization messages to their neighbors. The anchor flag will be set only if the sender is an anchor. Localization messages from non-anchors will be ignored unless the message includes an attached localization message with the anchor flag set.

\paratitle{Controller} On those timesteps when localization messages are received, non-anchor robots sort received localization messages by the time of the underlying anchor message (most recent first), and then use a stable sort to sort by anchor flag (anchor messages first). After sorting, the robot iterates over the received localization messages and takes the intersection of each localization belief, dilated by the transmission range, $c$, plus the maximum distance a robot can travel per timestep, $d$. If the intersection ever becomes empty while iterating, the last localization message is dropped and the iteration ends. The resulting intersection is the bounding box that represents the robot's new localization belief. In the next timestep, the robot will transmit its localization belief, bundling the most recent anchor message encountered during the iteration (this may be a direct transmission from an anchor, or an anchor message that arrived as an attachment to a non-anchor's message). The algorithm's operation is illustrated in Fig.~\ref{fig:coop-loc-belief}. DBP is used to delete and ignore messages from blocked senders.

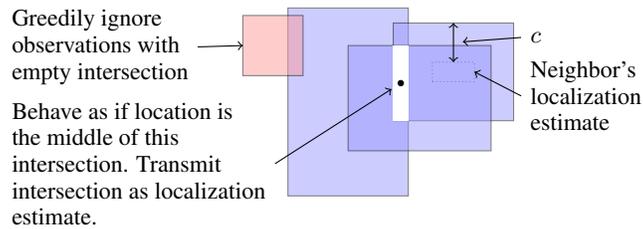
\begin{figure}[ht]
    \centering
    \usetikzlibrary{decorations.markings,arrows,calc,shapes.geometric,positioning}
\begin{tikzpicture}
    \draw[fill=blue!40,draw=black,semitransparent] (-1.4,-1) rectangle (0.2,1.5);
    \draw[fill=blue!40,draw=black,semitransparent] (0,0) rectangle (1.6,1.3);
    \draw[densely dotted, semitransparent] (-0+.52,-0+.52) rectangle (1.6-.52,1.3-.52);
    \draw[fill=blue!40,draw=black,semitransparent] (-0.6,-0.4) rectangle (1.3,1);
    \draw[fill=white,opaque,draw opacity=0] (0,0) rectangle (0.2,1) node[midway] (loc) {};
    \draw[fill=black] (loc) circle (1pt);
    \draw[<->] (1.6/2,1.3) -- ({(1.6-.52)/2+0.52/2},1.3-.52) node[midway] (offset) {};
    \node[name=text-offset,font=\footnotesize,align=left,anchor=north west] at (1.7,1.3) {$c$};
    \draw[->] (text-offset.west) -- (offset);
    \node[name=text-obs,font=\footnotesize,align=left,anchor=north west] at (text-offset.south west) {Neighbor's \\ localization \\ estimate};
    \draw[->] (text-obs.west) -- (1.6-.52,{(1.3-.52)/2+.52/2});
    \draw[fill=red!40,draw=black,semitransparent] (-2.0,0.6) rectangle (-1.2, 1.4);
    \node[name=text-bad,font=\footnotesize,align=left,anchor=north east] at (-2.6,1.6) {Greedily ignore \\ observations with \\ empty intersection};
    \draw[->] (text-bad.east) -- (-2,{(1.4-.6)/2+.6});
    \node[name=text-loc, font=\footnotesize,align=left,anchor=north west] at (text-bad.south west) {Behave as if location is \\ the middle of this \\intersection.  Transmit \\intersection as localization\\ estimate.};
    \draw[->] (text-loc.east) -- (loc);
\end{tikzpicture}
	\caption{Observation-based cooperative localization setup for use with DBP. Non-anchor robots estimate their localization based on localization estimates received from their neighbors. Estimates are dilated by the transmission distance, and then reduced with the set intersection operator to compute the localization belief. Localization estimates are ordered by the age of the underlying anchor message, with estimates sent directly from anchors given priority.}
	\label{fig:coop-loc-belief}
\end{figure}

\paratitle{Accusation rules} Received localization messages are subjected to two accusation rules. The first rule is applied by anchor robots when receiving localization messages from other anchors, either directly or as attachments to non-anchor localization messages. Given that the other anchor $j$ claims to be at $\tilde{\lilwork}_j$ at time $s$, let $\Delta t=t-s$ the elapsed time and $\Delta\lilprop_i=\|\tilde{\lilwork}_j-\lilwork_i\|$. The receiving anchor $i$ will accuse $j$ if $c\Delta t<\Delta\lilprop_i$, or in other words, if the anchor $j$'s localization message has traveled faster-than-possible through the network. The second accusation rule can be issued by all robots, including non-anchors. The second rule asserts that the first rule hold between any received non-anchor localization message and its attached anchor message. These simple accusations could be extended if the robot capabilities were better. For example, if the robots could measure a lower bound on the distance from senders, anchors would be able to issue analogous accusations in situations where localization messages from other anchors should have been received sooner.

 \paratitle{Experiment setup} The W-MSR algorithm cannot be chosen as a baseline for this case study, as cooperative localization is not solved via linear consensus problem outside of small-scale settings where each robot can directly observe every other robot in the swarm. We instead demonstrate our approach as a proof-of-concept for Byzantine-resilient cooperative localization. We simulate $|\coop|=120$ (80 of which act as fixed-position anchors) and $|\uncoop|=50$. The Byzantine robots, which attempt to disrupt the localization of the cooperative non-anchors, transmit false anchor localization messages by taking their true position and adding a random attack offset to the x- and y-coordinates sampled uniformly from [-20,20]m. The impact of the false anchor messages on non-anchor robots is to disrupt the iteration over localization messages -- since the false anchor localization will likely have an empty intersection with localization messages from nearby cooperative anchors, leading to degraded cooperative localization performance.

\begin{figure}[ht]
    \centering\def\svgwidth{0.6\linewidth}\input{./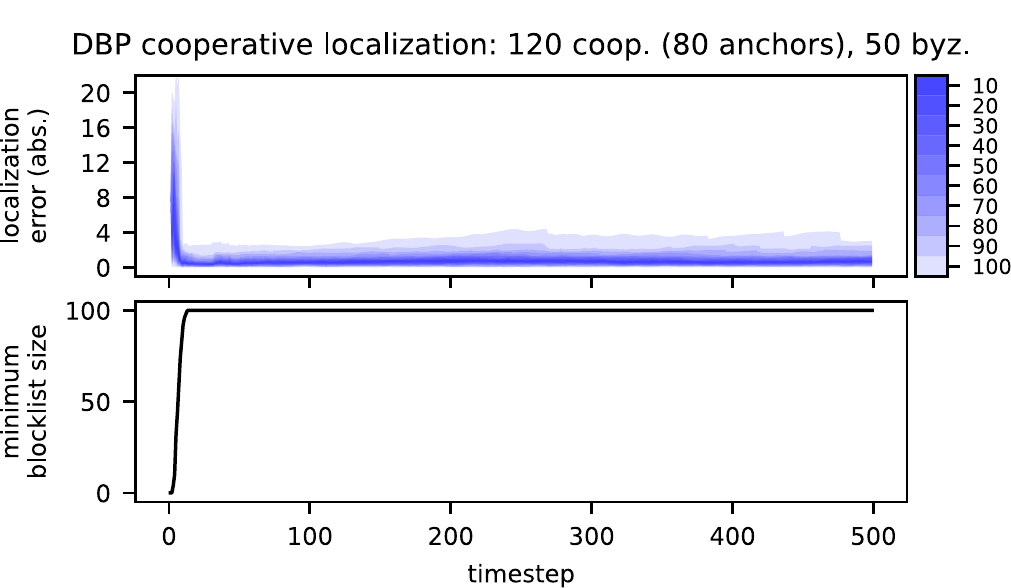_tex}
    \caption{DBP-based cooperative localization performance. At top, we plot the absolute error that the cooperative robots have in the estimate of the x-coordinate of their position. At bottom we plot the minimum size of the cooperative robots' blocklists -- once all of the Byzantines are blocked the estimation error returns to nominal values as the influence of the Byzantines has been mitigated.}
    \label{fig:coop-loc-DBP}
\end{figure}
\begin{figure}[ht]
    \centering\def\svgwidth{0.6\linewidth}\input{./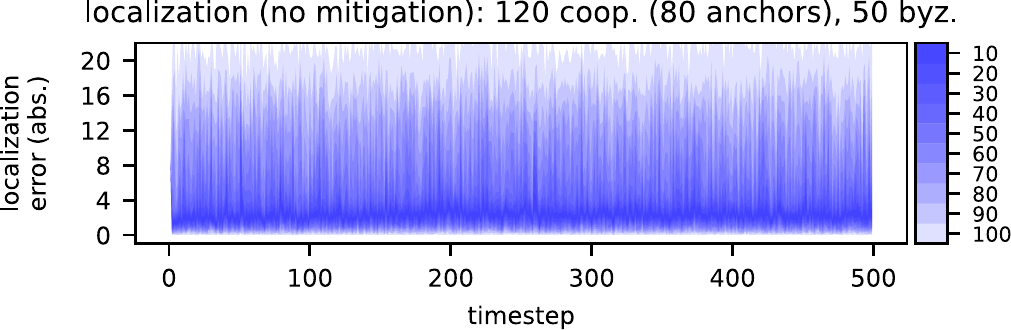_tex}
    \caption{To support our claim that DBP is a suitable approach for this task, we show the impact that Byzantine robots can have on cooperative localization -- Byzantines can cause the cooperative robots to have arbitrarily large localization errors.}
    \label{fig:coop-loc-no-mit}
\end{figure}

\paratitle{Experiment result} In Fig.~\ref{fig:coop-loc-DBP} we plot the absolute error that the cooperative non-anchor robots have in their x-coordinate, i.e. the absolute difference between what they believe their x-coordinate to be and the ground truth. We observe that while initially the cooperative non-anchors may have errors near the attack offset of $\sim20$m, the Byzantine robots are rapidly accused and blocked by the cooperative robots. After the Byzantine robots have been blocked, the anchor localization sharing algorithm provides low-error cooperative localization for the non-anchor robots. As a point of comparison, we also simulate the same scenario with DBP disabled, with the absolute x-coordinate localization error shown in Fig.~\ref{fig:coop-loc-no-mit}. As expected, the Byzantine robots significantly disrupt the localization, causing the cooperative non-anchor robots to have consistently high errors up to the attack offset.

\section{Conclusion}
\label{sec:conclusion}

This work has proposed the use of a decentralized blocklist protocol based on inter-robot accusations as a means to provide Byzantine resilience for multi-robot systems. We have shown that as an alternative to the W-MSR algorithm, our approach permits temporary Byzantine influence while accusations are made, but in exchange adapts to Byzantine robots as they are detected, allows for fast information propagation, and can be applied for applications beyond consensus. Based on empirical evidence from swarm target tracking, time synchronization, and localization case studies, our approach is more practical than W-MSR in terms of scalability to large swarms as it does not require each cooperative robot to have $2F+1$  neighbors, nor does it require $F+1$ cooperative observers for information to propagate. In fact, our approach only requires that messages are delivered by network floods in spite of $F$ Byzantine robots, and observations from a single cooperative robot can propagate quickly through the entire swarm. Furthermore, we have shown that our approach can for the first time provide Byzantine resilience for the large-scale decentralized cooperative localization problem. In our future work, we hope to extend our approach to systems where accusations are not always sound and to explore swarm algorithms that optimize the speed with which Byzantine robots are discovered and accused.

\balance
\bibliographystyle{unsrtnat}
\bibliography{bib/main,bib/vigilante}

\end{document}